\newtheorem{theorem}{Theorem}
\newtheorem{definition}{Definition}
\newtheorem{proposition}{Proposition}
\newtheorem{lemma}{Lemma}
\title{Differentially Private Graph Neural Network with \\ Importance-Grained Noise Adaption}
\author{
Yuxin Qi$^1$
\and
Xi Lin$^2$
\and
Jun Wu$^{3}$
\affiliations
$^{1,2,3}$Shanghai Jiao Tong University
\emails
\{qiyuxin98, linxi234, junwuhn\}@sjtu.edu.cn
}
\begin{document}

\maketitle

\begin{abstract}
    Graph Neural Networks (GNNs) with differential privacy have been proposed to preserve graph privacy when nodes represent personal and sensitive information. However, the existing methods ignore that nodes with different importance may yield diverse privacy demands, which may lead to over-protect some nodes and decrease model utility. In this paper, we study the problem of importance-grained privacy, where nodes contain personal data that need to be kept private but are critical for training a GNN. We propose NAP-GNN, a node-importance-grained privacy-preserving GNN algorithm with privacy guarantees based on adaptive differential privacy to safeguard node information. First, we propose a Topology-based Node Importance Estimation (TNIE) method to infer unknown node importance with neighborhood and centrality awareness.  Second, an adaptive private aggregation method is proposed to perturb neighborhood aggregation from node-importance-grain. Third, we propose to privately train a graph learning algorithm on perturbed aggregations in adaptive residual connection mode over multi-layers convolution for node-wise tasks. Theoretically analysis shows that NAP-GNN satisfies privacy guarantees. Empirical experiments over real-world graph datasets show that NAP-GNN achieves a better trade-off between privacy and accuracy.
\end{abstract}

\section{Introduction}
In recent years, graph neural networks have achieved outstanding performance in several domains, such as social analysis \cite{yang2021consisrec}, financial anomaly detection \cite{chen2020phishing}, time series analysis \cite{wang2021hierarchical}, and molecule synthesis \cite{gasteiger2021directional}. Through aggregating the feature information of neighboring nodes and fully mining and fusing the topological associations in graph data, graph neural networks yield state-of-art results in tasks such as link prediction \cite{zhao2021csgnn}, node classification \cite{guan2022robognn}, and sub-graph classification \cite{yang2021heterogeneous}. However, graph data in the real world usually contain private information. For example, in social network graphs, the edges between nodes indicate the existence of social attributes, such as being friends. Node features also carry sensitive information, such as the average online time of users per week and the average number of comments per week. To satisfy the privacy preservation demands of nodes, privacy-preserving GNN models have been proposed in \cite{olatunji2021releasing}. The Locally Private Graph Neural Network (LPGNN) presented in \cite{sajadmanesh2021locally} realizes the GNN model framework under local differential privacy guarantee. The features and labels of nodes can be protected. An edge-level differential privacy graph neural network (DP GNN) algorithm was proposed in \cite{wu2022linkteller} by perturbing the adjacency matrix of the graph. Some other researches attempt to address the privacy problem in GNN through federated \cite{jiang2020federated} and split learning \cite{zhou2020privacy}.\par
Although the existing DP GNN models enable privacy data protection, it does not consider the diverse privacy demand of nodes in the graph. Take node degree, for example. Nodes with different degree may yield different privacy protection needs. Most realistic graph data meet the power-law distribution \cite{clauset2009power}. The degree of most nodes is small, and few nodes have a relatively large degree. 
In the social media graph, suppose the nodes represent users, and the edges represent the following behaviors among users. The in-degree of the nodes indicates how many followers they have, and the out-degree indicates how many other users they follow. Users with a larger number of followers have stronger social effects, and other users will notice their comments on social media and their friends list. Nodes with higher degrees require a more vital level of privacy protection. \par
However, existing works treat all nodes' privacy demands as the same and do not provide corresponding privacy guarantees for nodes with different privacy needs, which would lead to over-protecting some nodes, injecting over-needed noise, and decreasing model utility. Another drawback of existing methods protecting GNN through DP-SGD where noise is added on the parameter gradient is that the magnitude of injected noise and the privacy budget is accumulated during the training phase in proportion to epoch number.\par
To solve these problems, we propose a Node-Importance-Grained Adaptive Differential Private Graph Neural Network (NAP-GNN), which adds different extents of noise based on node importance and independent of the training epoch. Our goal is to develop a fine-grained and adaptive differential privacy algorithm in GNN that flexibly distributes privacy budget according to known node importance and uses topology information in graphs.
In this paper, we calculate the degree of a node as the sum of in-degree and out-degree. A supervised node importance estimation method, TNIE, is proposed to capture relations node neighbors and flexibly make centrality awareness adjustments. Then depending on the calculated node importance, we propose an importance-awareness fine-grained permutation method to realize node diverse privacy-preserving demand. An adaptive message passing method with residual connection is applied to improve model utility. We show theoretically that NAP-GNN satisfies the requirement of differential privacy. Experiments on real graph datasets show that the adaptive privacy budget allocation mechanism and adaptive residual aggregation of NAP-GNN can improve the utility of the model and outperforms existing approaches under a given privacy budget. The main contributions of this work are as follows.\par

\begin{itemize}
    \item We study a novel problem of node-importance-related privacy concerns in GNNs. To the best of our knowledge, this is the first work to analyze this issue.
    \item We propose a new algorithm named Node-Importance-Grained Adaptive Differential Private Graph Neural Network (NAP-GNN), which tackles different privacy requirements among nodes by assigning global budget.
    \item We present Topology-based Node Importance Estimation (TNIE), a supervised estimation method, to address unknown nodes' importance estimation problem considering neighborhood and centrality awareness from the topology perspective.
    \item The experimental results on four benchmark graph datasets demonstrate the availability and effectiveness of our proposed NAP-GNN method.
\end{itemize}

\section{Problem Formulation}
In this section, we first revisit the definition of differential privacy proposed by \cite{dwork2014algorithmic} and graph neural work. Then we define the problem of learning a GNN with node data privacy concerns related to node importance.
\subsection{Graph Neural Network}
Let $\mathcal{G}=\{V, E, A, X, Y\}$ be an unweighted undirected graph, where $V$ and $E$ denote the nodes set and edges set. The adjacency matrix $A \in \{0, 1\}^{N \times N}$ represents the link among edges, $|N|$ denotes the node number. For $\forall v_i, v_j \in V$, if there exists an edge between $v_i, v_j$, then $A_{ij} = 1$, for else $A_{ij} = 0$. Node feature of $v \in V$ is a $d$-dimension vector, and the N $\times$ $d$ matrix $X$ represents the stack of all nodes' feature, where $X_v \in X$ denotes the feature of $v$. $Y \in \{0,1\}^{N \times M}$ represents the label of nodes, and M is the class number.  \par
The typical message-passing-based GNN consists of two phases: message aggregation and updating. In the message aggregation phase of $i$-th layer, every node share and receive neighbors embedding of the former $i$-1-th layer and output a new embedding after applying a transformation, which can be defined as:
\begin{equation}
    E_v^i = f_{agg}(\{h_u^{i-1}, u \in \mathcal{N}(v)\}).
\end{equation}
$\mathcal{N}(v)$ denotes the adjacent node set of node $v$, and $h_u^{i-1}$ represents the embedding output of node $u$ at $i$-1-th layer. $f_{agg}$ is the aggregation linear function like SUM, MEAN, MAX etc. $E_v^i$ is the aggregate output of $i$-th layer after the aggregation transformation of all adjacent nodes. Update transformation is employed on the $E_v^i$, which is shown as:
\begin{equation}
    h_v^i = f_{upd}(E_v^i, h_v^{i-1}; \theta_i).
\end{equation}
$f_{upd}$ denotes the learnable function that takes the aggregate vector as input and outputs the embedding of node $v$ at $i$-th layer. $f_{upd}$ is determined by parameter $\theta_i$. The input $h_v^0$ of GNN's first layer is $X_v$, and the last layer generates embedding vectors $h_v^L$ for $\forall v \in V$, which can be used in downstream tasks, $L$ represents the total layer. In this paper, we focus on the node classification task. A softmax layer is employed on the final embedding vectors $h_v^L$ to get the class probability $C_v$ of node $v$.
\subsection{Problem Definition}
The goal of this paper is to preserve the adaptive privacy of the graph nodes from importance-grain in the training step of GNN through differential privacy. Different from previous work in \cite{daigavane2021node}, we aim to propose an epoch-independent method considering node importance from topology without losing much utility of GNN. Refer to the definition in \cite{sajadmanesh2022gap}, we first define the notion of Node-level adjacent graph in this paper as follows:\par
\begin{definition}[Node-level adjacent graph]
    Graphs $\mathcal{G}$ and $\mathcal{G}'$ are node-level adjacent if at most one node's feature vector is different. Without loss of generality, let $\mathcal{G}$ can be obtained by altering a node in $\mathcal{G}'$.
\end{definition}
Then we define the $\epsilon$-Node-level differential privacy as:\par
\begin{definition}[$\epsilon$-Node-level differential privacy]\label{node-level-DP}
    Let $\mathcal{G}$ and $\mathcal{G}'$ be two node-level adjacent graph datasets, given $\epsilon > 0$, the random algorithm $\mathcal{A}$ is $\epsilon$-Node-level differential privacy if for any set of outputs $S \in \text{Range}(\mathcal{A})$, satisfies:
    \begin{equation}
        \text{Pr}[\mathcal{A}(\mathcal{G}) \in S] \leq e^{\epsilon} \; \text{Pr}[\mathcal{A}(\mathcal{G}') \in S].
    \end{equation}
\end{definition}
Based on Definition \ref{node-level-DP}, the global graph sensitivity can be defined as:\par
\begin{definition}[Global graph sensitivity]
    The global graph sensitivity of function $f$ on two node-level adjacent graphs $\mathcal{G}$ and $\mathcal{G}'$ is:
    \begin{equation}
        \Delta_{g\mathcal{G}} = \text{max} ||f(\mathcal{G}) - f(\mathcal{G}')||_1
    \end{equation}
\end{definition}

\section{Proposed Method:NAP-GNN}
\begin{figure}[t]
    \centering
    \includegraphics[width=7.5cm]{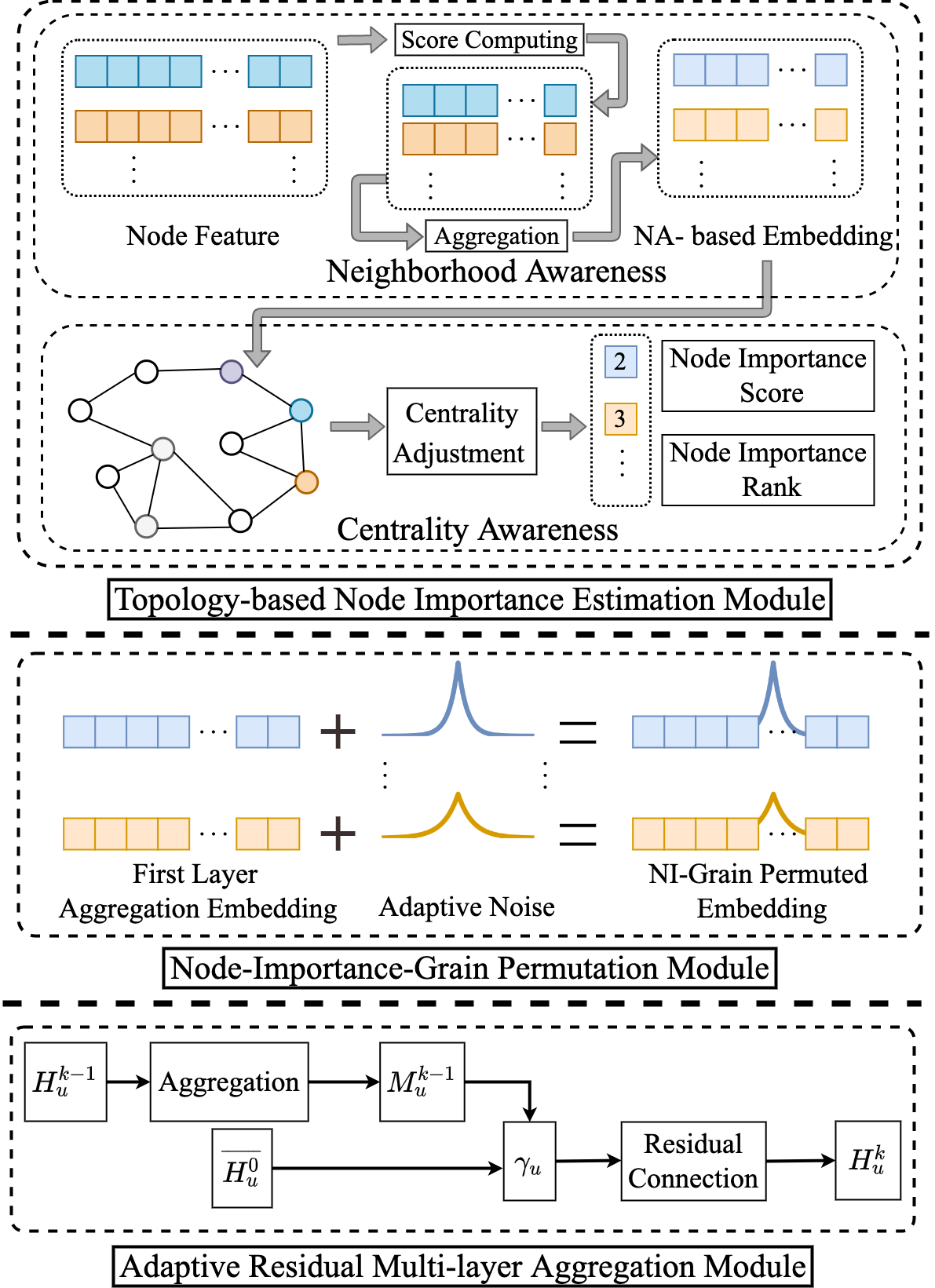}
    \caption{Overview of NAP-GNN’s architecture}
    \label{overview}
\end{figure}
In this section, we present our proposed \textbf{N}ode-Importance-Grained \textbf{A}daptive Differential \textbf{P}rivate \textbf{G}raph \textbf{N}eural \textbf{N}etwork (NAP-GNN). First, the overall flow of NAP-GNN is provided. Then we show the algorithm design and key components of NAP-GNN in detail. 

\subsection{Overall Flow of NAP-GNN}
Figure \ref{overview} shows the overall flow of NAP-GNN, which consists of three components: \textit{Topology-based Node Importance Estimation Module}, \textit{Node-Importance-Grained Permutation Module} and \textit{Adaptive Residual Multi-Layer Aggregation Module}. The key mechanism of NAP-GNN is to preserve the graph data privacy considering node importance, which is achieved in a fine-grain permutation module. We use the Laplacian mechanism to add noise on the aggregation vector in a customized way, motivated by the fact that altering a feature vector can be viewed as adding or mincing extra noise.\par

The propagation layer is essential to graph neural network utility. Compared to fully propagating all neighbors embedding, we propose to use an adaptive conversion between noised aggregation embedding and residual connection, letting different nodes employ customized propagation layers.
\subsection{Topology-based Node Importance Estimation}
To estimate topology-based node importance in a given graph, we propose a TNIE method considering the \textit{Neighborhood Awareness} and \textit{Centrality Awareness} motivated by \cite{park2019estimating}. Neighborhood Awareness refers to the idea that the importance of a node is influenced by its neighboring nodes, so the neighboring nodes' importance is a good representation of the current node's importance. Centrality Awareness means that nodes with higher centrality are more important than nodes with lower centrality. \par
The effect of neighboring nodes includes two aspects: node original feature and connectivity. To model this relationship, TNIE first uses a score computing network to map the original node features to a $r$-dimensional space:
\begin{equation}
    f_u^0 = ScoreComputing(\textbf{x}_\textbf{u}; u \in V),
\end{equation}
where $u$ is a node in $\mathcal{G}$. Score computing network can be any neural network that takes node original feature as input and output encoded feature. In this paper, we use Multi-Layer Perception (MLP). Then through intermediate embedding propagation, TNIE realizes weighted aggregation from node $u$ and its neighbors for the $t$-th layer ($t = 1, 2,... T$):
\begin{equation}
    f_u^t = \sum_{v \in N(u)} \frac{1}{\sqrt{D_u}} \frac{1}{\sqrt{D_v}} f_v^{t-1},
\end{equation}
where
$D_u$ denotes degree of $u$. After $T$ layers aggregation, $\forall u \in V$ gets neighborhood awareness-based embedding $f_u^T$.\par
The degree is a common proxy for node centrality. The degree of node has an impact on node importance estimation. Instead of initial degree $log(D_u + \alpha)$ of node $u$, where $\alpha$ is a small parameter, TNIE adopts a shifting degree $\lambda(log(D_u + \alpha)) + \phi)$ to allow the possible discrepancy between degree and importance rank, where $\lambda$ and $\phi$ are learnable parameters. Then the shifting degree is used to adjust the neighborhood awareness-based embedding $h_u^T$ of node $u$ from centrality awareness consideration, which is as follows:
\begin{equation}
    s(u)^* = \sigma((\lambda(log(D_u + \alpha)) + \phi)\cdot f_u^T),
\end{equation}
where $\sigma$ is a non-linear activation function, $s(u)^*$ denotes estimated importance score.\par
To estimate unknown nodes' importance score, TNIE uses mean squared error between the given importance score $\mathcal{NI}(u)$ and estimated score $s(u)^*$ for node $u \in V_t \subset V$ to train the TNIE model. The loss function is:
\begin{equation}
    \frac{1}{|V_t|}\sum_{u \in V_t}\big(s(u)^* - \mathcal{NI}(u) \big)^2
\end{equation}

\subsection{Node-Importance-Grained Adaptive Laplace Mechanism}
The goal of the Node-Importance-Grained Permutation Module is to privately release nodes' first aggregation embedding using adaptive noise proportion to sensitivity and node importance. Motivated by the fact that perturbing a node $u$'s edges in the graph can be seen as changing neighborhood aggregation of $u$'s adjacent nodes $\forall {v} \in N(u)$, we realize Node-Importance-Grained Permutation Module by adding appropriate noise on the first layer's aggregation function. Specially, we use the sum aggregation function as the first layer, which is equivalent to the multiplication of the adjacent matrix and the input row-normalized feature. The permutation process can be presented as follows:
\begin{equation}\label{sum-function}
    \overline{\mathcal{H}^0}(A, V, X) = \{\overline{H^0_u}\}_{u \in V} \; s.t. \; \overline{H^0_u} = \sum_{j=1}^{|N|} a_{uj} \textbf{x}_j + Lap(\frac{\Delta \mathcal{H}^0}{\epsilon_u}),
\end{equation}
where $H^0_u = \sum_{j=1}^{|N|} a_{uj} \textbf{x}_j$ denotes the sum aggregation process of node $u$, $a_{uj} \in A$, $\textbf{x}_j$ is the  row-normalized feature of node $j$, $Lap(\frac{\Delta \mathcal{H}^0}{\epsilon_u})$ is the perturbed noise, $\Delta \mathcal{H}^0$ denotes the sensitivity of aggregation function, $\epsilon_u$ is the privacy budget. \par
\begin{lemma}
    Let $\mathcal{G}=\{A,V,X\}$ and $\mathcal{G}='=\{A',V',X'\}$ be two adjacent graph dataset. 
    Then the global graph sensitivity of first sum aggregation layer $\Delta \mathcal{H}^0 \leq 2D_{max}$, where $D_{max}$ is maximum node degree. 
\end{lemma}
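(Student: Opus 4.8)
The plan is to unfold the definition of node-level adjacency and track exactly which coordinates of the vector-valued output $\mathcal{H}^0(A,V,X) = \{H^0_u\}_{u \in V}$ can move when we pass from $\mathcal{G}$ to $\mathcal{G}'$. By Definition 1 the two graphs differ in the feature vector of at most one node; call it $k$, so that $\textbf{x}_k \neq \textbf{x}'_k$ while every other feature and the entire adjacency structure are shared. Writing the collection $\{H^0_u\}_{u \in V}$ as a single stacked vector, the global graph sensitivity is the worst-case value of $\sum_{u \in V} ||H^0_u(\mathcal{G}) - H^0_u(\mathcal{G}')||_1$ taken over all such adjacent pairs, so the whole argument reduces to bounding this sum.

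First I would pin down the support of the difference. Since $H^0_u = \sum_{j} a_{uj} \textbf{x}_j$ reads the feature of node $k$ only through the coefficient $a_{uk}$, the summand defining $H^0_u$ is unchanged whenever $a_{uk} = 0$. Hence only the neighbors of $k$, i.e. those $u$ with $a_{uk} = 1$, contribute to the difference, and there are exactly $D_k$ of them, which is at most $D_{max}$.

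Second, I would bound the per-node contribution. For a neighbor $u$ of $k$ we have $H^0_u(\mathcal{G}) - H^0_u(\mathcal{G}') = a_{uk}(\textbf{x}_k - \textbf{x}'_k) = \textbf{x}_k - \textbf{x}'_k$, so $||H^0_u(\mathcal{G}) - H^0_u(\mathcal{G}')||_1 = ||\textbf{x}_k - \textbf{x}'_k||_1$. Because the features are row-normalized so that $||\textbf{x}_k||_1 = ||\textbf{x}'_k||_1 = 1$, the triangle inequality gives $||\textbf{x}_k - \textbf{x}'_k||_1 \leq ||\textbf{x}_k||_1 + ||\textbf{x}'_k||_1 = 2$. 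Summing this over the $D_k \leq D_{max}$ affected neighbors yields $\sum_{u \in V} ||H^0_u(\mathcal{G}) - H^0_u(\mathcal{G}')||_1 \leq 2 D_k \leq 2 D_{max}$, and taking the maximum over adjacent pairs gives $\Delta \mathcal{H}^0 \leq 2 D_{max}$.

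The support identification and the triangle-inequality estimate are routine; the step that needs care is making the factor of $2$ rigorous, namely justifying that row-normalization forces each feature to have unit $L_1$ norm so that a single-row perturbation costs at most $2$, and confirming that the stacked-output $L_1$ norm decomposes cleanly as the sum of per-node $L_1$ norms with no cross terms. I would also verify the edge case where $k$ carries a self-loop: then $H^0_k$ itself changes as well, but the self-loop is already counted in $D_k$, so the bound $2 D_{max}$ is left intact.
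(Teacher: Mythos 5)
Your proof is correct, and it reaches the bound $2D_{max}$ by a genuinely different route than the paper. You take Definition 1 literally: the two graphs share the same adjacency matrix and differ only in one feature vector $\textbf{x}_k \neq \textbf{x}'_k$. Then only the $D_k$ neighbors of $k$ are affected, and your factor of $2$ comes from the triangle inequality $\|\textbf{x}_k - \textbf{x}'_k\|_1 \leq \|\textbf{x}_k\|_1 + \|\textbf{x}'_k\|_1 \leq 2$ applied to row-normalized features, giving $2D_k \leq 2D_{max}$. The paper instead models adjacency as \emph{removing} node $k$ outright, setting $a'_{ij}=0$ whenever $i=k$ or $j=k$, so the adjacency structure changes too; there each affected term has $L_1$ norm at most $1$ (a single row-normalized feature appears or disappears), but the difference is supported on both $k$'s own aggregation $H^0_k$ (its row, $D_{out_k}$ terms) and its neighbors' aggregations (its column, $D_{in_k}$ terms), so the factor of $2$ arises as $D_{out_k}+D_{in_k} = 2D_k$. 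In short: your $2$ is per-coordinate with $D_k$ coordinates; the paper's $2$ is $2D_k$ coordinates with weight $1$ each. Your version is the one actually consistent with the paper's stated Definition 1 and covers arbitrary feature perturbations; the paper's version covers edge deletion, which is the model its Theorem 1 privacy analysis silently relies on (the term $\sum_{j \in N(k)} a_{kj}\epsilon_k$ there exists only because $k$'s own row changes). Neither argument covers the fully general case where the feature vector and the incident edges change simultaneously --- in that case the two contributions add and the bound would degrade --- so it is worth being explicit, as you were, about which adjacency model you are using.
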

\begin{proof}
    Assume adjacent graph dataset $\mathcal{G}$ and $\mathcal{G}'$ differs in node $k$. Then we have:
    \begin{equation}
        \begin{aligned}
            \Delta \mathcal{H}^0 
            & = \max_{u \in V}||\sum\mathcal{H}^0(A, V, X) - \sum\mathcal{H}^0(A', V', X')||_1 \\
            & = \max \sum_{u \in V} ||\sum_{j=1}^{|N|}(a_{uj}\textbf{x}_j - a_{uj}'\textbf{x}_j')||_1
        \end{aligned}
    \end{equation}
    $A$ and $A'$ are two adjacent matrix. Without loss of generality, in $\mathcal{G}'$, we assume that node $k$ is removed from $\mathcal{G}$. Therefore, for nodes $i$ and $j$, we have $a_{ij}' = 0, \; if \; i = k \;or \;j = k$, otherwise $a_{ij}' = a_{ij}$.
    Then we can get the following inequality:
    \begin{equation}
        \begin{aligned}
            \Delta \mathcal{H}^0 
            & = ||\sum_{j = 1}^{|N|}a_{kj}\textbf{x}_j + \sum_{i=1}^{|N|}a_{ik}\textbf{x}_k||_1 \leq \sum_{j = 1}^{|N|}a_{kj} + \sum_{i=1}^{|N|}a_{ik}\\
            &\leq D_{out_k} + D_{in_k} \leq 2D_{max},
        \end{aligned}
    \end{equation}
    which concludes the proof.
\end{proof}
Denote the \textit{Node Importance} of node $u$ as $\mathcal{NI}(u)$, the rank of importance as $\mathcal{R\_NI}(u)$.  $\mathcal{C}(u)$ represents the privacy protection level of node $u$. For node $m$ and $n$, if $\mathcal{NI}(m) > \mathcal{NI}(n)$, then $\mathcal{C}(m) > \mathcal{C}(n)$. Let $\epsilon_{\mathcal{A}}$ denotes the total privacy cost of first aggregation layer, then the assigned privacy budget $\epsilon_{u}$ of node $u$ is proportional to $\mathcal{C}(u)$, which is $\epsilon_{u} = \epsilon_{\mathcal{A}} \cdot \beta_u$, where $\beta_u$ is a weight coefficient that quantifies the impact of node importance in the privacy protection level.\par
Many real-world graph datasets follow power-law distribution.
When node degree distribution is extremely imbalance, the maximum node degree is obviously higher than most nodes, Laplace-based noise generation mechanism may yield high noise.
To tackle this challenge, we propose to leverage the potential wasted privacy budget generated by node degree difference of adjacent graph nodes.\par
For arbitrary node $u \in V$, it receives potential reusable privacy budget from neighboring node $k \in N(u)$. Total reusable privacy ratio of $k$ is $D_{max} - D_k$, for each neighbor node of $k$, the assigned ratio $r(u,k)$ is
\begin{equation}
    r(u,k) = \frac{\mathcal{R\_NI}(u
    )}{\sum_{j \in N(k)}\mathcal{R\_NI}(j)},
\end{equation}
which is proportion to node importance rank. Then the weight coefficient of $u$ is the minimum of all reusable budget ratio:
\begin{equation}
    \beta_u = \min_{k \in N(u)}\{\frac{\mathcal{R\_NI}(u
    )}{\sum_{j \in N(k)}\mathcal{R\_NI}(j)}(D_{max} - D_k) + 1, \frac{D_{max}}{D_k}\}.
\end{equation}

\subsection{Adaptive Residual Multi-Layer Aggregation}
The Adaptive Residual Multi-Layer Aggregation Module takes the node-importance-grained perturbed aggregation vectors as input and outputs the embedding vectors, which are generated through aggregating the multi-hops neighbors embedding with adaptive residual. Compared with equally receiving all neighbors embedding, the adaptive residual aggregation module allows each node to learn from different neighbors with different weights.\par 
To keep edges and node labels private, we perturb them before the message passing and updating steps. Edge Randomization \cite{wu2022linkteller} is selected as the edge pre-processing method because it can preserve the adjacency matrix's sparse structure. To reduce the impact of adding or removing edges on the graph, we propose a degree-preserving Edge Randomization method, in which an unbiased sample method is used at the end with the sampling probability of:
\begin{equation}
    p_u^{sample} = \frac{2D_u}{D_u + N - Ns + D_us}
\end{equation}
for node $u$, where $D_u$ is the degree of $u$ before permutation, $s$ is a parameter of Edge Randomization satisfying $s \geq \frac{2}{e^{\epsilon_B}+1}$, $\epsilon_B$ is the privacy budget. The expectation of sampled node degree $\mathbb{E}(\overline{D_u'})$ equals $\mathbb{E}(D_u)$ for $\forall u \in V$.
We exploit Random Response \cite{kairouz2016discrete} to encode node $i$'s labels for it outperforms other oracles in low dimensions. The transformation distribution is:
\begin{equation}\label{random response}
    p(y_i'|y_i) = 
    \left\{
        \begin{aligned}
            &\frac{e^{\epsilon_C}}{e^{\epsilon_C}+M-1}, if\; y_i'=y_i \\
            &\frac{1}{e^{\epsilon_C}+M-1}, \; otherwise,\\
        \end{aligned}
    \right.
\end{equation}
where $\epsilon_C$ is the privacy budget, $M$ is class number.\par
For the aggregation process, motivated by \cite{liu2021graph}, a node-wise adaptive embedding aggregation and the residual connection are applied to balance the smoothness of different perturbed vectors between neighbors and the node itself. The Adaptive Message Passing (AMP) process includes three steps: feature aggregation, residual weight computing, and linear combination, which can be presented as:
\begin{equation}\label{feature aggregation}
    \textbf{AMP}(u, k, 1): M_u^{k-1} = \Tilde{A}H_u^{k-1},
\end{equation}
\begin{equation}\label{weight computing}
    \textbf{AMP}(u, k, 2): \gamma_u = \max(1-\tau \cdot \frac{1}{||M_u^{k-1} - \overline{H_u^0}||_2}, 0),
\end{equation}
\begin{equation}\label{linear combination}
    \textbf{AMP}(u, k, 3): H_u^{k} = (1-\gamma_u)\overline{H_u^0} + \gamma_uM_u^{k-1}.
\end{equation}\par

Equation (\ref{feature aggregation}) shows the feature aggregation step for node $u$ at layer $k \in \{1, 2,... K\}$, where $\Tilde{A} = \hat{D}^{-\frac{1}{2}}\hat{A}\hat{D}^{-\frac{1}{2}}$, $\hat{A} = \overline{A'} + I$ is the sampled noise adjacent matrix with self loop, $D$ is the degree matrix, $H_u^0 = \overline{H_u^0}$. Residual weight $\beta_u$ of node $u$ is computed through equation (\ref{weight computing}) depending on the deviation of adaptive perturbed embedding $\overline{H_u^0}$ and feature aggregation of neighbors $M_u^k$, $\tau$ is a parameter that controls the smoothing. Finally, the aggregation embedding of $u$ at layer $k+1$ is the linear combination of $\overline{H_u^0}$ and $M_u^k$ weighted by $\beta_u$ in (\ref{linear combination}).

\section{Theoretical Analysis}
In this section, we give the theoretical analysis of NAP-GNN. The complete process of NAP-GNN is shown in Algorithm \ref{algorithm}.
\begin{theorem} \label{sum-dp}
    Algorithm 1 preserves $\epsilon_A$-DP in the computation of first differential private aggregation layer $\mathcal{H}^0$.
\end{theorem}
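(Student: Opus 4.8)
The plan is to recast the first aggregation layer as a weighted Laplace mechanism and to verify that the per-node budgets $\epsilon_u = \epsilon_{\mathcal{A}}\beta_u$ compose into the single global budget $\epsilon_{\mathcal{A}}$. First I would fix two node-level adjacent graphs $\mathcal{G}$ and $\mathcal{G}'$ that differ only in the feature of a single node $k$, and note that the noiseless aggregation $H^0_u = \sum_{j} a_{uj}\textbf{x}_j$ changes only for the neighbors $u \in N(k)$, since $H^0_u$ depends on $\textbf{x}_k$ precisely when $a_{uk}=1$. For each affected neighbor I would write $\delta_u = \|H^0_u(\mathcal{G}) - H^0_u(\mathcal{G}')\|_1$, which is at most $\|\textbf{x}_k - \textbf{x}_k'\|_1 \le 2$ because the features are row-normalized.

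The second step is to read off the privacy loss from the Laplace densities. Since the noise $Lap(\Delta\mathcal{H}^0/\epsilon_u)$ is sampled independently across nodes and coordinates, the joint output density factorizes, and the triangle inequality collapses the log-density ratio into
\begin{equation}
    \ln\frac{\text{Pr}[\mathcal{A}(\mathcal{G})=z]}{\text{Pr}[\mathcal{A}(\mathcal{G}')=z]} \le \sum_{u \in N(k)} \frac{\epsilon_u}{\Delta\mathcal{H}^0}\,\delta_u = \frac{\epsilon_{\mathcal{A}}}{\Delta\mathcal{H}^0}\sum_{u \in N(k)} \beta_u\,\delta_u .
\end{equation}
Inserting the sensitivity bound $\Delta\mathcal{H}^0 \le 2D_{max}$ together with $\delta_u \le 2$, the factors of two cancel, so the whole claim reduces to the combinatorial inequality $\sum_{u \in N(k)} \beta_u \le D_{max}$.

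The main obstacle is exactly this last inequality, and it is where the minimum-over-neighbors definition of $\beta_u$ does the real work. Because $k \in N(u)$ for every affected neighbor $u$, the minimum defining $\beta_u$ is bounded above by its term indexed by $k$, namely $\frac{\mathcal{R\_NI}(u)}{\sum_{j\in N(k)}\mathcal{R\_NI}(j)}(D_{max}-D_k)+1$. Summing this over $u \in N(k)$, the importance-rank fractions sum to one so the first part contributes $D_{max}-D_k$, while the $+1$ terms contribute $|N(k)| = D_k$, giving exactly $D_{max}$ as required. The remaining care points are to check that the alternative branch $D_{max}/D_k$ of the minimum can only shrink $\beta_u$ and hence never breaks the bound, and to reconcile the feature-perturbation adjacency used in the density computation with the node-removal argument behind the sensitivity bound so that $\Delta\mathcal{H}^0$ is applied consistently.
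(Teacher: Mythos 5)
Your proposal is correct, and its skeleton matches the paper's: factorize the independent Laplace densities into a per-node product, bound the log-ratio by a weighted sum of per-node budgets, and close the argument with the combinatorial identity that the importance-rank fractions over $N(k)$ sum to one, giving $(D_{max}-D_k) + D_k = D_{max}$. The genuine difference is the adjacency convention, and it changes which parts of the $\beta_u$ definition are needed. The paper's proof (following its Lemma) models adjacency as \emph{removal} of node $k$, so the aggregation at $k$ itself also changes, contributing an extra term $D_k\epsilon_k$ to the privacy loss; bounding that term is precisely where the second branch $D_{max}/D_k$ of the minimum defining $\beta_k$ is invoked, and each neighbor's change is at most $1$ rather than $2$. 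You instead take Definition 1 literally (only $\textbf{x}_k$ changes, edges fixed), so only the neighbors $u \in N(k)$ are affected, each by at most $\|\textbf{x}_k - \textbf{x}_k'\|_1 \le 2$; the factor of $2$ cancels against the $2$ in $\Delta\mathcal{H}^0 = 2D_{max}$, and the whole theorem reduces to $\sum_{u \in N(k)}\beta_u \le D_{max}$, using only the rank-fraction branch of the minimum. Your closing caveat about reconciling the two conventions is well placed: the paper itself is inconsistent, stating feature-change adjacency in Definition 1 but proving both the Lemma and the Theorem under node removal. Your route is the one that is internally consistent with the stated definition (and under it the total $\ell_1$ change is $\le 2D_k \le 2D_{max}$, so using $\Delta\mathcal{H}^0 = 2D_{max}$ remains legitimate); the paper's route buys coverage of the stronger removal-style adjacency in which edges disappear along with the node, at the cost of needing both branches of the minimum.
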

\begin{proof}
    All nodes' aggregation embedding in $\mathcal{G}$ are perturbed, therefore we have:
    \begin{equation}
        Pr(\overline{\mathcal{H}^0}(A, V, X)) = \prod_{i=1}^N exp(\frac{\epsilon_i}{\Delta \mathcal{H}^0}||\sum_{j=1}^N a_{ij}\textbf{x}_j - \overline{H^0_i}||_1).
    \end{equation}
    $\Delta \mathcal{H}^0$ is set to $2D_{max}$ in Algorithm \ref{algorithm}. Assume adjacent graph dataset $\mathcal{G}$ and $\mathcal{G}'$ differs in node $k$. We have:
    \begin{equation} \label{dp}
        \begin{aligned}
        & \frac{Pr(\overline{\mathcal{H}^0}(A, V, X))}{Pr(\overline{\mathcal{H}^0}(A', V', X'))}\\
        & \leq \prod_{i=1}^N exp(\frac{\epsilon_i}{\Delta \mathcal{H}^0} ||\sum_{j=1}^N a_{ij}\textbf{x}_j-\sum_{j=1}^N a'_{ij}\textbf{x}'_j||_1)\\
        & = exp(\sum_{i=1}^N \sum_{j=1}^N \frac{\epsilon_i}{\Delta \mathcal{H}^0} || a_{ij}\textbf{x}_j- a'_{ij}\textbf{x}'_j||_1)\\
        & = exp(\sum_{j \in N(k)} a_{kj}\frac{\epsilon_k}{\Delta\mathcal{H}^0} + \sum_{i \in N(k)} a_{ik}\frac{\epsilon_i}{\Delta \mathcal{H}^0}).
        \end{aligned}
    \end{equation}
    Let $f(k) = \sum_{j\in N(k)}a_{kj}\epsilon_k + \sum_{i \in N(k)}a_{ik}\epsilon_i$. Then we have:
    \begin{equation}\label{tmp2}
        \begin{aligned}
            &f(k) = D_k \epsilon_k + \sum_{i\in N(k)}\epsilon_i = \epsilon_A (D_k \beta_k + \sum_{i \in N(k)} \beta_i)\\
            & \leq \epsilon_A (D_k \beta_k + \sum_{i \in N(k)}(\frac{\mathcal{R\_NI}(u)}{\sum_{j \in N(k)}\mathcal{R\_NI}(j)}(D_{max} - D_k + 1)) \\
            & \leq \epsilon_A(D_k \frac{D_{max}}{D_k} + D_{max} - D_k + D_k) \leq 2 \epsilon_A D_{max}.
        \end{aligned}
    \end{equation}
    Substitute equation (\ref{tmp2}) into (\ref{dp}), we can get:
    \begin{equation}
        \frac{Pr(\overline{\mathcal{H}^0}(A, V, X))}{Pr(\overline{\mathcal{H}^0}(A', V', X'))} \leq exp(\frac{2 \epsilon_A D_{max}}{\Delta \mathcal{H}^0}) = exp(\epsilon_A).
    \end{equation}
    Theorem \ref{sum-dp} is proved.
\end{proof}

Since the first aggregation function is linear, the generated embedding is unbiased, as follows:

\begin{proposition} \label{unbiased}
    The sum aggregator function defined in (\ref{sum-function}) for the first layer is unbiased.
\end{proposition}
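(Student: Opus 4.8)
The plan is to show that the expected value of the perturbed aggregation output equals the true (noiseless) aggregation output, which is the natural meaning of ``unbiased'' for this mechanism. First I would fix an arbitrary node $u \in V$ and write out the perturbed embedding from equation (\ref{sum-function}) as
\begin{equation}
    \overline{H^0_u} = \sum_{j=1}^{|N|} a_{uj} \textbf{x}_j + Lap\!\left(\frac{\Delta \mathcal{H}^0}{\epsilon_u}\right),
\end{equation}
so that the only random quantity on the right-hand side is the additive Laplace noise term. The true aggregation output is the deterministic quantity $H^0_u = \sum_{j=1}^{|N|} a_{uj} \textbf{x}_j$.

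The key step is then simply to take expectations over the noise and invoke linearity of expectation:
\begin{equation}
    \mathbb{E}\!\left[\overline{H^0_u}\right] = \sum_{j=1}^{|N|} a_{uj} \textbf{x}_j + \mathbb{E}\!\left[Lap\!\left(\frac{\Delta \mathcal{H}^0}{\epsilon_u}\right)\right].
\end{equation}
Here I would use the standard fact that a Laplace random variable $Lap(b)$ centered at zero has mean $0$ for any scale parameter $b > 0$; since $\Delta \mathcal{H}^0 / \epsilon_u > 0$ is a valid scale, the noise expectation vanishes. This leaves $\mathbb{E}[\overline{H^0_u}] = H^0_u$, which says precisely that the perturbed aggregator is an unbiased estimator of the noiseless sum aggregation. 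Because this holds coordinate-wise for every node $u$, it extends immediately to the full stacked output $\overline{\mathcal{H}^0}(A,V,X)$.

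Honestly, there is no substantive obstacle here: the result is an almost immediate consequence of the Laplace mechanism adding zero-mean noise to a deterministic linear function, and the word ``linear'' in the statement is doing most of the work. The only point worth stating carefully is that the noise is added independently per coordinate (and per node), so that linearity of expectation applies term by term; beyond that the proof is a one-line expectation calculation. If the intended notion of ``unbiased'' were instead something more delicate — for example unbiasedness after the row-normalization or after downstream residual aggregation — then one would need to track how the zero-mean property propagates, but since the proposition concerns only the first sum-aggregation layer as defined in (\ref{sum-function}), the centeredness of the Laplace noise suffices.
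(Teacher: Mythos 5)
Your proof is correct and matches the paper's own argument essentially verbatim: both fix a node $u$, apply linearity of expectation to $\overline{H^0_u} = H^0_u + Lap(\Delta \mathcal{H}^0/\epsilon_u)$, and use the fact that the zero-centered Laplace noise has mean zero, giving $\mathbb{E}[\overline{H^0_u}] = H^0_u$. Your added remark about independence of the per-coordinate noise is a minor point of extra care that the paper omits, but it does not change the substance.
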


Now we analyze the unbiased property of degree-preserving edge sampling:
\begin{proposition}
    Let $\overline{A}$ denote the noised adjacent matrix obtained by Edge Randomization, $D_u$ is the degree of node $u$ in the original graph, $\overline{D_u'}$ denotes the degree after sampling. Then $\mathbb{E}(\overline{D_u'}) = \mathbb{E}(D_u)$.
\end{proposition}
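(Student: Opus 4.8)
The plan is to compute $\mathbb{E}(\overline{D_u'})$ directly by linearity of expectation, treating the degree of $u$ in the final graph as a sum of independent Bernoulli indicators over all potential incident edges, and then to verify that the engineered sampling probability $p_u^{sample}$ exactly cancels the bias introduced by Edge Randomization. Since $D_u$ is a fixed quantity of the original graph, the right-hand side is simply $\mathbb{E}(D_u)=D_u$, so the task reduces to showing $\mathbb{E}(\overline{D_u'})=D_u$.

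First I would make the two-stage mechanism explicit. Edge Randomization replaces each entry of the adjacency matrix independently: with probability $1-s$ it keeps the original bit and with probability $s$ it resamples a fresh uniform bit, so a true incident edge survives with probability $1-\tfrac{s}{2}$ and a non-edge is created with probability $\tfrac{s}{2}$; this is exactly the regime that yields $s\geq\tfrac{2}{e^{\epsilon_B}+1}$ for $\epsilon_B$-edge privacy. Writing $b_{uj}\in\{0,1\}$ for the randomized bit and $c_{uj}\in\{0,1\}$ for the independent sampling indicator (each incident edge kept with probability $p_u^{sample}$), the final incident-edge indicator is $b_{uj}c_{uj}$, so $\overline{D_u'}=\sum_{j}b_{uj}c_{uj}$. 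By independence of randomization and sampling, $\mathbb{E}(\overline{D_u'})=p_u^{sample}\sum_{j}\mathbb{E}(b_{uj})$.

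Next I would evaluate $\sum_{j}\mathbb{E}(b_{uj})$ by splitting the potential neighbours of $u$ into its $D_u$ true edges and its non-edges, obtaining the expected post-randomization degree $\mathbb{E}\big(\sum_{j}b_{uj}\big)=D_u\big(1-\tfrac{s}{2}\big)+(N-D_u)\tfrac{s}{2}$. Substituting this together with the stated formula for $p_u^{sample}$ and simplifying the resulting rational expression is the crux: the numerator $2D_u$ of the sampling probability is chosen precisely so that the product telescopes back to $D_u$, which reduces the whole claim to a single polynomial identity in $D_u$, $N$ and $s$.

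The main obstacle I anticipate is bookkeeping rather than conceptual depth. One must fix the convention for the number of potential neighbours (whether self-loops are counted, i.e.\ $N$ versus $N-1$) and confirm that the same per-node rate $p_u^{sample}$ is applied uniformly to true and false edges, since both choices enter the denominator $D_u+N-Ns+D_us$ and the final identity closes only under the convention used to derive the formula. Once the post-randomization expected degree and the sampling probability are written with matching conventions, the verification that $p_u^{sample}\cdot\mathbb{E}\big(\sum_{j}b_{uj}\big)=D_u=\mathbb{E}(D_u)$ is a routine algebraic cancellation, completing the proof.
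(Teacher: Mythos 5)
Your overall plan is the same as the paper's: compute the expected degree after Edge Randomization, multiply by $p_u^{sample}$ (justified by independence of the two stages), and check that the product equals $D_u$. The genuine gap is that the ``routine algebraic cancellation'' you defer to the last step is false under the randomization model you committed to. With ``keep with probability $1-s$, resample uniformly with probability $s$'' (survival probability $1-\tfrac{s}{2}$ for true edges, creation probability $\tfrac{s}{2}$ for non-edges), the post-randomization expected degree is
\begin{equation*}
\mathbb{E}\Bigl(\sum_j b_{uj}\Bigr) \;=\; D_u\Bigl(1-\tfrac{s}{2}\Bigr)+(N-D_u)\tfrac{s}{2} \;=\; D_u + \tfrac{s}{2}\bigl(N-2D_u\bigr),
\end{equation*}
and multiplying by $p_u^{sample}=\frac{2D_u}{D_u+N-Ns+D_us}$ does not return $D_u$: take $N=4$, $D_u=2$, $s=\tfrac12$; then the expected post-randomization degree is $2$, the sampling probability is $\tfrac45$, and the product is $\tfrac85\neq 2$. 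So the polynomial identity to which you reduce the whole claim simply does not hold, and your proof does not close.

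The paper's proof closes because it uses a different expected-degree formula, $\mathbb{E}(\overline{D_u}) = D_us + D_u(1-s)+0.5(N-D_u)(1-s) = \tfrac12\bigl(D_u+N-Ns+D_us\bigr)$, which is exactly half the denominator of $p_u^{sample}$, so the product telescopes to $D_u$ immediately. Note what that formula asserts: original edges survive with probability $s+(1-s)=1$, and non-edges are created with probability $0.5(1-s)$ --- a mechanism for which $p_u^{sample}$ was evidently engineered, but which is not the randomized-response mechanism consistent with the privacy condition $s\ge \frac{2}{e^{\epsilon_B}+1}$ that the paper itself states (that condition corresponds exactly to your model, since $\frac{1-s/2}{s/2}=\frac{2}{s}-1\le e^{\epsilon_B}$). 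You correctly identified the convention mismatch as the crux risk, but you resolved it in the direction that breaks the identity; to reproduce the paper's argument you would have to adopt its survival/creation probabilities (equivalently, its formula for $\mathbb{E}(\overline{D_u})$). Arguably what your attempt has uncovered is a real internal inconsistency in the paper between its stated Edge Randomization convention and its formula for $p_u^{sample}$, rather than a proof that can be completed as you outlined.
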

\begin{proof}
In the original adjacent matrix $A$, there are $D_u$ 1s and $(N-D_u)$ 0s for node $u$. The expectation of $u$'s degree after Edge Randomization is:
\begin{equation}
\begin{aligned}
    \mathbb{E}(\overline{D_u}) 
    & = D_us + D_u(1-s) + 0.5(N-D_u)(1-s)\\
    & = 0.5D_u + 0.5N - 0.5Ns + 0.5D_us,
\end{aligned}
\end{equation}
where $s$ is the Bernoulli sample probability \cite{wu2022linkteller}. Then we can have $\mathbb{E}(\overline{D_u'}) = \mathbb{E}(\overline{D_u}) * p_u^{sample} = \mathbb{E}(D_u)$.
\end{proof}

\begin{algorithm}[t]
    \caption{Node-Importance-Grained Adaptive Differential Private Graph Neural Network}
    \label{algorithm}
    \textbf{Input}: Graph $\mathcal{G}=\{V, E, A, X, Y\}$, the node set of known node importance $V_t$, depth $T$ and $K$, maximum node degree $D_{max}$, privacy budget $\epsilon_A$, $\epsilon_B$ and $\epsilon_C$, balance coefficient $\tau$\\
    \textbf{Output}: Node label predictions
    \begin{algorithmic}[1] 
        \STATE \textbf{Estimate node importance from topology perspective} \\ 
        $\forall \; u \in \mathcal{N}$, get $f_u=ScoreComputing(x_u)$\\
        \FOR{$t \in \{1,2,3...,T\}$ }
        \STATE $f_u^t = \sum_{v \in N(u)} \frac{1}{\sqrt{D_u}} \frac{1}{\sqrt{D_v}} f_v^{t-1}$
        \ENDFOR
        \STATE Get neighborhood awareness-based embedding $f^T_u$ 
        \STATE Estimated score: $s(u)^* = \sigma((\lambda(log(D_u + \alpha)) + \phi)\cdot f_u^T)$\\
        \STATE Train:
        $loss = \frac{1}{|V_t|}\sum_{u \in V_t}\big(s(u)^* - \mathcal{NI}(u) \big)^2$\\
        \STATE Get $\mathcal{NI}(i)$ and $\mathcal{R}\_\mathcal{NI}(i)$ of $i \in V - V_t$
        \STATE \textbf{Inject adaptive Laplace noise into differential Aggregation layer $\mathcal{H}^0$ and noise to $A$ and $Y$} \\
        \STATE $\Delta(\mathcal{H}^0) = 2D_{max}$\\
        \STATE $\forall i \in N$, compute weight coefficient $\beta_i$, keeping if $\mathcal{R\_NI}(i) > \mathcal{R\_NI}(j)$, then $\beta_i > \beta_j$
        \FOR{ $H^0(A,V,X)_i \in \mathcal{H}^0$}
        \STATE $\epsilon_i = \beta_i * \epsilon_{A}$ \\
        \STATE $H^0(A,V,X)_i = \sum_{j \in N(i)} a_{ij}\textbf{x}_j  + Lap(\frac{\Delta(\mathcal{H}^0)}{\epsilon_i})$
        \ENDFOR
        \STATE Perturb adjacent matrix $A$ under $\epsilon_B$ and sample edges with degree-preserving 
        \STATE Perturb known node label under $\epsilon_C$ 
        \STATE \textbf{Adaptive residual aggregate multi-hop neighbors}\\
        \FOR{$k \in \{1,2,3...,K\}$ }
        \STATE first step in AMP, get feature aggregation:\\
        $\textbf{AMP}(u, k, 1): M_u^{k-1} = \Tilde{A}H_u^{k-1}$
        \STATE second step in AMP, compute residual weight:\\
        $\textbf{AMP}(u, k, 2): \gamma_u = \max(1-\tau \cdot \frac{1}{||M_u^{k-1} - \overline{H_u^0}||_2}, 0)$
        \STATE third step in AMP, linear combine the aggregated feature and noised vector:\\
        $\textbf{AMP}(u, k, 3): H_u^{k} = (1-\gamma_u)\overline{H_u^0} + \gamma_uM_u^{k-1}$
        \ENDFOR
        \RETURN label predictions $\{y_i\}$ for unknown label nodes
    \end{algorithmic}
\end{algorithm}
\begin{table*}
    \centering
    \begin{tabular}{c|c c|c c|c c|cc}
    \hline
        Dataset & \multicolumn{2}{c|}{Cora} & \multicolumn{2}{c|}{Citeseer} & \multicolumn{2}{c|}{Lastfm}  & \multicolumn{2}{c}{Facebook} \\ \hline
        Privacy Level & Low & High & Low & High & Low & High & Low & High \\ \hline
        MLP-DP & 38.3$\pm$0.92 & 30.7$\pm$1.5 & 36.5$\pm$1.59 & 26.4$\pm$2.52 & 41.7$\pm$1.39 & 26.1$\pm$1.34 & 53.2$\pm$0.35 & 43.4$\pm$0.51 \\ 
        DP-GNN & 64.2$\pm$1.17 & 55.7$\pm$2.80 & 54.5$\pm$0.85 & 42.1$\pm$4.25 & 72.1$\pm$0.43 & 63.3$\pm$1.73 & 73.4$\pm$0.69 & 59.8$\pm$0.93 \\ \hline
        GCN-DP$_{equa}$ & 75.4$\pm$0.45 & 62.6$\pm$0.65 & 63.1$\pm$0.21 & 42.1$\pm$0.62 & 77.2$\pm$0.49 & 57.7$\pm$0.47 & 82.5$\pm$0.31 & 61.1$\pm$0.17 \\ 
        GCN-DP$_{adap}$ & 77.2$\pm$0.42 & 58.3$\pm$0.35 & 64.1$\pm$0.39 & 42.2$\pm$0.39 & 79.5$\pm$0.58 & 59.2$\pm$0.43 & 86.1$\pm$0.21 & 62.0$\pm$0.15 \\ 
        SAGE-DP$_{equa}$ & 63.8$\pm$0.33 & 44.1$\pm$0.63 & 46.1$\pm$1.19 & 31.3$\pm$0.45 & 72.9$\pm$0.24 & 50.1$\pm$0.36 & 85.7$\pm$1.31 & 51.3$\pm$1.18 \\ 
        SAGE-DP$_{adap}$ & 67.2$\pm$0.27 & 44.3$\pm$0.36 & 58.9$\pm$0.50 & 31.5$\pm$0.48 & 73.3$\pm$0.50 & 51.6$\pm$0.69 & 86.0$\pm$1.44 & 50.1$\pm$0.91 \\
        Ours$_{equa}$ & 83.1$\pm$0.11 & 63.4$\pm$0.26 & 64.5$\pm$0.21 & 42.4$\pm$0.17 & 84.5$\pm$0.10 & 65.3$\pm$0.15 & 89.8$\pm$0.07 & 65.5$\pm$1.10 \\
        \textbf{Ours} & \textbf{83.5$\pm$1.84} & \textbf{64.9$\pm$1.75} & \textbf{65.0$\pm$0.16} & \textbf{44.6$\pm$0.26} & \textbf{86.3$\pm$0.84 }& \textbf{65.9$\pm$0.15} & \textbf{91.3$\pm$0.64} & \textbf{67.8$\pm$0.17} \\ \hline
        Ours$_{deg}$ & 79.6$\pm$0.18 & 64.1$\pm$0.44 & 64.9$\pm$1.12 & 44.2$\pm$0.28 & 85.7$\pm$0.19 & 63.4$\pm$0.14 & 91.1$\pm$0.03 & 64.8$\pm$0.37 \\
        Ours$_{-edgeSam}$ & 81.5$\pm$0.15 & 64.5$\pm$1.30 & 63.8$\pm$1.14 & 43.9$\pm$2.07 & 85.4$\pm$1.87 & 64.9$\pm$0.14 & 90.0$\pm$1.07 & 67.1$\pm$0.09 \\ \hline
    \end{tabular}
    \caption{Performance of Compared Methods on Cora, Citeseer, Lastfm and Facebook. Test accuracy(\%) over two privacy level (for MLP-DP, set $\epsilon=10$ for high privacy cost and $\epsilon=20$ for low cost; for other models, set $\epsilon=7$ for high cost and $\epsilon=12$ for low cost, 
    ) is reported. `equa`, `adap`, `deg` and `-edgeSam` is short for `equal`, `adaptive`, `degree` and `without edge sample`, respectively.}
    \label{performance table}
\end{table*}

\begin{theorem}
    Algorithm 1 preserves $(\epsilon_A+\epsilon_B+\epsilon_C)$-DP. 
\end{theorem}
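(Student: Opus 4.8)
The plan is to treat Algorithm~1 as the sequential composition of three independent privatizing mechanisms followed by a post-processing stage, and then invoke the standard composition and post-processing theorems for pure differential privacy. The three mechanisms are: first, the adaptive Laplace perturbation of the first aggregation layer $\overline{\mathcal{H}^0}$ in~(\ref{sum-function}), which releases the privatized feature aggregations; second, the degree-preserving Edge Randomization applied to the adjacency matrix $A$ under budget $\epsilon_B$; and third, the Random Response applied to the known node labels under budget $\epsilon_C$. Theorem~\ref{sum-dp} already establishes that the first mechanism is $\epsilon_A$-DP, so it remains to verify the guarantees of the other two and then to assemble them.

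For the label mechanism, I would observe that the transition kernel in~(\ref{random response}) is exactly the $M$-ary randomized response: for any two label values the ratio of output probabilities is at most $\tfrac{e^{\epsilon_C}/(e^{\epsilon_C}+M-1)}{1/(e^{\epsilon_C}+M-1)}=e^{\epsilon_C}$ by direct inspection of the two cases, so the label release is $\epsilon_C$-DP. For the edge mechanism, I would argue that Edge Randomization flips each entry of $A$ by an independent randomized response with parameter $s \geq \tfrac{2}{e^{\epsilon_B}+1}$; bounding the worst-case probability ratio over two edge-adjacent adjacency matrices yields the factor $e^{\epsilon_B}$, so the released $\overline{A}$ is $\epsilon_B$-DP.

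With the three building blocks in place, I would finish in two moves. First, by the sequential composition theorem for pure DP, jointly releasing the privatized aggregation, the randomized adjacency, and the randomized labels is $(\epsilon_A+\epsilon_B+\epsilon_C)$-DP, since the privacy losses of independently executed $\epsilon$-DP mechanisms add. Second, the Adaptive Residual Multi-Layer Aggregation recursion of~(\ref{feature aggregation})--(\ref{linear combination}) and the final softmax prediction operate solely on these already-privatized quantities, with no further access to the raw graph; by the post-processing invariance of differential privacy, applying a data-independent map cannot increase the privacy loss. Chaining the two moves gives the claimed $(\epsilon_A+\epsilon_B+\epsilon_C)$-DP guarantee.

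The main obstacle I anticipate is reconciling the three mechanisms, each of which protects a different facet of the graph (features, edges, labels), so that their budgets legitimately add under a single adjacency notion; a clean statement should fix the node-level adjacency notion and check that every mechanism's guarantee holds with respect to it. A second delicate point is the degree-preserving edge sampling, whose sampling probability $p_u^{sample}$ depends on the \emph{true} degree $D_u$ and is therefore not a~priori pure post-processing of $\overline{A}$. I would handle this by folding the sampling step into the edge mechanism and verifying that its $\epsilon_B$ analysis already accounts for this dependence, so that no additional budget is consumed and the composition above remains valid.
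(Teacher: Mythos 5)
Your proposal follows essentially the same route as the paper's proof: decompose Algorithm~1 into the three privatizing mechanisms (the adaptive Laplace perturbation of $\overline{\mathcal{H}^0}$, handled by Theorem~\ref{sum-dp}; Edge Randomization under $\epsilon_B$; randomized response on labels under $\epsilon_C$), add the budgets by sequential composition, and treat the Adaptive Residual Multi-Layer Aggregation and the training phase as post-processing. The paper's version is terser---it cites \cite{wu2022linkteller} for the $\epsilon_B$ guarantee rather than re-deriving it, and compresses the post-processing claim into a single sentence---but the logical skeleton is identical, so your argument is a faithful (indeed more explicit) reconstruction.

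The added value in your write-up is the two ``delicate points'' you flag at the end, both of which are genuine and neither of which the paper resolves. First, the degree-preserving sampling probability $p_u^{sample}$ is computed from the \emph{true} degree $D_u$ of the original graph, so the sampling step is not post-processing of the randomized adjacency matrix alone; the paper's assertion that the downstream modules proceed ``without access private features and links'' silently overlooks this dependence, and your proposed fix (folding the sampling into the $\epsilon_B$ mechanism and re-verifying the bound) is stated but never carried out---in your proposal or in the paper. Second, the adjacency-notion mismatch is real: Theorem~\ref{sum-dp} and the sensitivity analysis use node-level adjacency, under which one node's entire row and column of $A$ may change, whereas the Edge Randomization guarantee of \cite{wu2022linkteller} is per-edge; promoting it to node-level DP would naively require a group-privacy factor of order $D_{max}$ that neither argument supplies. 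So your proof matches the paper's, but be aware that both share the same unfilled gaps.
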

\begin{proof}
    From theorem \ref{sum-dp}, we know that the first aggregation layer satisfies $\epsilon_A$-DP. 
    Adjacent matrix perturbation guarantees $\epsilon_B$-DP for $\epsilon_B \geq (\ln{\frac{2}{s}-1}), s\in (0,1]$ \cite{wu2022linkteller}. Random response mechanism on node label guarantees $\epsilon_C$-DP under equation (\ref{random response}). \par
    Node-level DP preserves all information of one node, including feature, edges and label. Differential aggregation layer $\mathcal{H}^0$ process node feature and edge privately. The following Adaptive Residual Multi-Layer Aggregation Module do not expose node features and edges for it only post-processing the noised aggregation embedding and perturbed adjacent matrix without access private features and links. Training phase also guarantees DP because node label is perturbed former. The total privacy cost is $(\epsilon_A+\epsilon_B+\epsilon_C)$-DP.
\end{proof}
\section{Experimental Evaluation}
In this section, we provide experiments on the proposed method and evaluate it under different parameter settings.\par
\subsection{Experiment Settings}
\subsubsection{Datasets}
Four publicly available datasets are used, Cora, Citeseer \cite{yang2016revisiting}, Lastfm \cite{rozemberczki2020characteristic} and Facebook \cite{rozemberczki2021multi}. Cora and Citeseer are typical citation networks, representing two sparse graphs. Lastfm and Facebook are social networks, representing large scale and dense graphs compared with Cora and Citeseer. Detailed information of datasets is shown in Table \ref{dataset-table}. \par
\begin{table}[h]
    \centering
    \begin{tabular}{c c c c c }
    \toprule
         Datasets & Nodes & Edges & Features & Avg\_Deg \\
    \midrule
         Cora & 2708 & 5278 & 1433 & 3.89\\
         Citeseer & 3327 & 4552 & 3703 & 3.73 \\
         Lastfm & 7083 & 25814 & 7842 & 8.28 \\
         Facebook & 22470 & 170912 & 4717 & 15.21 \\
    \bottomrule
    \end{tabular}
    \caption{Detailed Statistic of Used Datasets}
    \label{dataset-table}
\end{table}
\subsubsection{Compared Methods}
We compare with the following methods: DP-GNN \cite{daigavane2021node} that apply DP-Adam on 1-layer GNN. We also compare with MLP-DP, for MLP does not use graph topology. It is trained with DP-Adam to provide node-level protection. 
What's more, we compare two types of variants by privacy cost distribution methods: equal and adaptive. Two state-of-art GNN architectures, GCN and GraphSAGE, are used to substitute the proposed adaptive residual multi-layer aggregation module. We compare the performance of node importance estimation between node degree and TNIE through substitute $\mathcal{NI}(i)$ with $D_i$. Degree-preserving edge sample is removed from NAP-GNN as competitors.
\subsubsection{Training and Evaluation Settings} 
We randomly split the nodes in the dataset into training, test, and validation sets in the ratio of 50\%, 25\%, and 25\%. Since the original Lastfm is highly imbalanced, we choose the top 10 classes that have most samples. The compared methods, GCN and GraphSAGE, are two convolutional layers with hidden dimension 128, RELU activation function, and a dropout layer with a probability of 0.2. We use the same noise scale on all noise-added perturbation modules. All models are trained based on Adam optimizer \cite{kingma2014adam} with a maximum epoch of 1500. The initial value of the learning rate is 1e-3, and the decay mechanism is used with patience of 20 and a decay rate of 0.5. We measure the model performance by training 10 consecutive rounds on the test set and taking the average value with a 95\% confidence interval under bootstrapping with 2000 samples. All experiments are implemented by using Pytorch and PyTorch-Geometric (PyG). \par
\subsection{Results and Analysis}
\subsubsection{Evaluation on Adaptive Budget over Different $\epsilon$}

\begin{figure}[h]
    \centering
    \includegraphics[width=8cm]{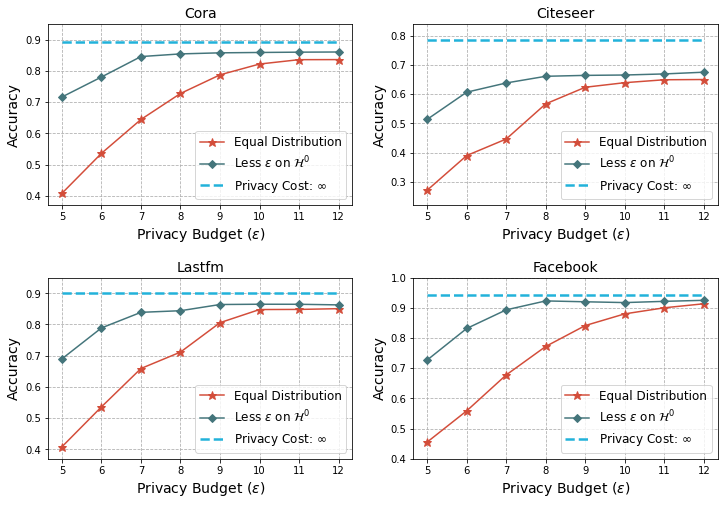}
    \caption{Evaluation on Different Privacy Budget.}
    \label{different_e}
\end{figure}

\begin{figure*}[h]
    \centering
    \includegraphics[width=17.5cm]{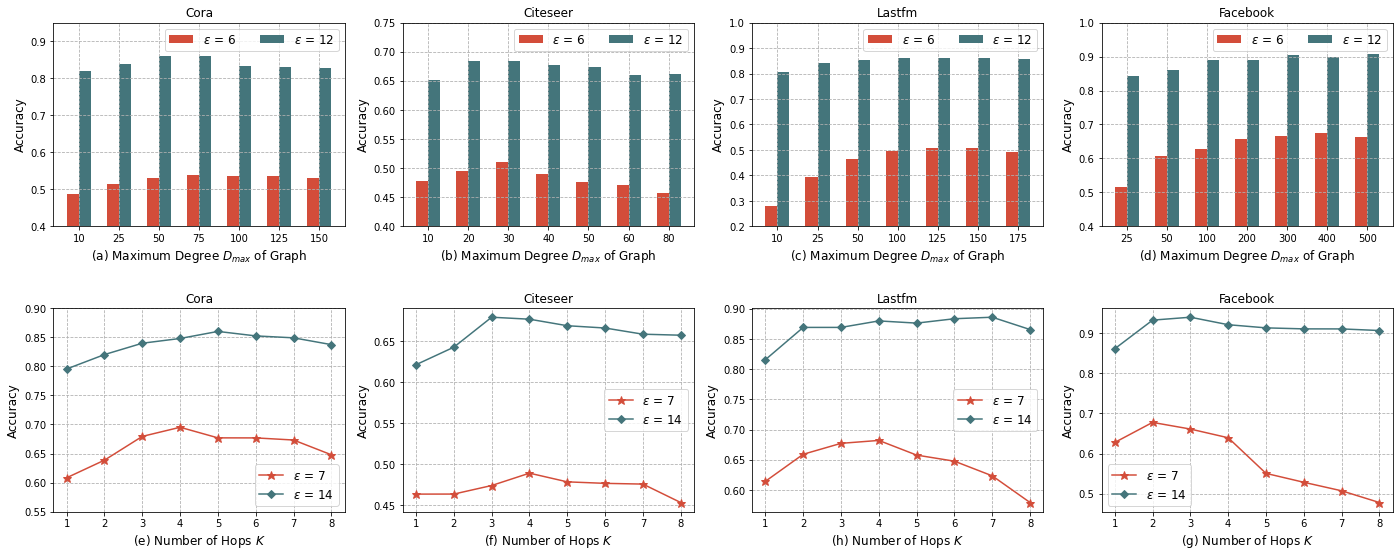}
    \caption{Evaluation on Maximum Degree $D_{max}$ of Graph (a-d) and Hop $K$ of Multi-Adaptive-Layer (e-g).}
    \label{degree-hop}
\end{figure*}

We first compare the difference in accuracy between the proposed NAP-GNN and competitors under different privacy budgets. The results are shown in Table \ref{performance table} and Figure \ref{different_e}. It can be observed that the adaptive DP mechanism achieves higher accuracy than equally distributing the total privacy budget under the same method in most cases.
The performance gap between the two differential privacy methods is influenced by the average degree graph. Under the same global privacy budget, we also compare different noise scale ratios between differential private aggregation layer $\mathcal{H}^0$, the degree-preserving adjacent matrix $\overline{A}$ and noised label $\overline{Y}$. With noise on $\overline{A}$ be constant, we add more noise on $\mathcal{H}^0$ (decline $\epsilon_A$), which is less $\epsilon$ on $\mathcal{H}^0$. Compared with the equal distribution method, the accuracy improvement of accuracy represents adaptive DP on $\mathcal{H}^0$ can mitigate the noise waste problem caused by node over-protecting. The scheme that assigns less $\epsilon$ on $\mathcal{H}^0$ converges faster. As $\epsilon$ increases (noise decreases), the model's utility tends to be the same for the two compared schemes.
\par

\subsubsection{Evaluation on Node Importance Estimation}
We investigate the effectiveness of the importance estimation module. We compare the case where TNIE is used as usual and the case where node degree is regarded as the importance score. The accuracy gap of method \textit{Ours$_{deg}$} in line 10 and \textit{Ours} in line 9 of Table \ref{performance table} demonstrates that the performance of the model with TNIE is better. This is because node degree only reflects the neighbor number and does not consider interactions between the adjacent nodes and the node feature.

\subsubsection{Evaluation on Degree-Preserving Edge Sampling}
We compare the usual NAP-GNN with the case that edge sampling is removed and directly inputs noised adjacent matrix to the next module. The result is shown in line 11 in Table \ref{performance table}. We can see that in all cases, the accuracy of NAP-GNN is higher with edge sampling than without it.

\subsubsection{Evaluation on Maximum Degree $D_{max}$ of Graph}

We analyze the impact of the maximum node degree of the graph on model performance. The performance variation of the adaptive differential privacy mechanism is shown in Figure \ref{degree-hop}(a)-(d). The accuracy of the model increases to a peak and then decreases as $D_{max}$ increases. The maximum degree of the sub-graphs taken at the peak differs on data with different degree average value. When $D_{max}$ increases, the neighbor information that nodes can aggregate grows, but at the same time, more noise is received, thus affecting the accuracy of the model. Meanwhile, the Laplace mechanism is affected by data sensitivity. When $D_{max}$ increases, the data sensitivity increases, and thus the noise added to each node on average increases, reducing the utility of the model.\par
\subsubsection{Evaluation on Multi-Adaptive-Layer $K$}
We investigate the effect of different hops on the model performance. The results are shown in Figure \ref{degree-hop}(e)-(g). As can be seen, both NAP-GNN and its competitors can aggregate more information from neighbors when $K$ increases. However, there is a trade-off between $K$ and accuracy. When the value of $K$ increases, the accuracy of both NAP-GNN and its competitors increases first,  then reaches a peak and decreases. This is because when more layers of neighbor information are aggregated, the noise data collected from the neighbors also increase, affecting the behavior of the model. Besides, when $\epsilon$ is small, the noise scale of the joined data is larger, and the model needs higher $K$ to reach the peak.
\section{Conclusion}
In this paper, we propose a novel adaptive differential private graph neural network from node-importance-grain. We use adaptive aggregation perturbation, where the node-importance-grained Laplace mechanism is applied to the first aggregation function, and adaptive residual multi-layer aggregation, where embedding vectors are generated through multi-hops with the adaptive residual connection. Moreover, we present a new node importance estimation method named TNIE from graph topology considering neighborhood and centrality awareness. Experimental results over real-world graph datasets show that our NAP-GNN can achieve better privacy and accuracy trade-off and outperforms existing methods. In the future, we would like to investigate the rationale behind the trade-off between graph properties and noise scale from theoretical and algorithmic views, which would improve model utility and promote differentially private graph neural networks in practice.  \par

\clearpage
\bibliographystyle{named}
\bibliography{ijcai23}

\end{document}